\documentclass{article}

\PassOptionsToPackage{dvipsnames}{xcolor}  % avoid option clash: tikz loads xcolor first
\usepackage[final]{neurips_2026}

\makeatletter
\renewcommand{\@noticestring}{%
  }
\makeatother

\usepackage[utf8]{inputenc}
\usepackage[T1]{fontenc}
\usepackage{microtype}
\usepackage{amsmath, amssymb, amsthm}
\usepackage{booktabs}
\usepackage{graphicx}
\usepackage{algorithmic}
\usepackage{enumitem}
\usepackage{bm}
\usepackage{tikz}
\usetikzlibrary{arrows.meta, positioning, calc}
\usepackage{multirow}
\usepackage{subcaption}
\usepackage{float}
\usepackage{xcolor}
\usepackage{hyperref}
\usepackage{cleveref}
\usepackage{url}
\usepackage{makecell}
\usepackage{booktabs}
\usepackage{amsmath, amssymb, amsthm}

\newcommand{\iqr}[1]{{\scriptsize [#1]}}
\newtheorem{theorem}{Theorem}

\title{Emergent Latent-State Computation under Stochastic Volatility}

\author{
Xiaoyu Huang$^{1}$ \\
Department of Mathematics\\
Temple University\\
Philadelphia, PA, USA
\And
Lulu Wang$^{2,}$\thanks{Corresponding author: \texttt{wanglu@dickinson.edu}. Code available at \url{https://github.com/LuluWangGitHubt/SVTransformer}.} \\
Department of Data Analytics\\
Dickinson College\\
Carlisle, PA, USA
}

\begin{document}

\maketitle

\begin{abstract}

Mechanistic interpretability has largely focused on language models and deterministic toy tasks.
Much less is known about how sequence models internally represent latent stochastic dynamics under noisy, partially observed observations.
We study this question in a controlled multivariate stochastic volatility setting, where models observe only returns while the ground-truth latent volatility state is known to the researcher.
This setting provides a useful benchmark for mechanistic interpretability under partial observability: the latent state is hidden from the model but directly available for evaluation.
Across architectures, losses, and output heads, we find evidence for a two-stage computation.
Hidden representations encode substantial information about the next latent volatility state, and the output head maps this representation to squared return forecasts.
Furthermore, in Transformers, latent-state decodability emerges at identifiable architectural stages whose location depends on the volatility period. In long-cycle regimes, this computation simplifies into an explicit latent-state filter consisting of a learned linear projection followed by $\ell^2$ normalization. Output-head replacement further shows that part of the degradation under noisy MSE training arises from readout misalignment rather than representation failure. These results suggest that stochastic volatility models provide a useful benchmark for mechanistic interpretability under noisy latent dynamics and partial observability.

\end{abstract}

\section{Introduction}
\label{sec:intro}
Mechanistic interpretability has largely focused on language models and synthetic algorithmic tasks \citep{wang2022interpretability, bricken2023monosemanticity, nanda2023progress, chughtai2023toy}, where the target computation is often deterministic and fully observable. Much less is known about how neural sequence models represent latent stochastic dynamics in noisy, partially observed settings.  Stochastic volatility models \citep{taylor1986modelling,kim1998stochastic,shephard1996statistical} provide a natural testbed: a central class of financial econometric models in which volatility evolves as an unobserved stochastic state rather than as a deterministic function of past returns, as in ARCH/GARCH models \citep{engle1982autoregressive,bollerslev1986generalized,shephard1996statistical}. In this setting, returns are noisy observations of the latent process, and accurate forecasting often requires an implicit filtering computation that infers hidden states from past observations.

We study whether small Transformers trained only to forecast future squared returns recover internal representations of latent volatility states, and how these representations emerge across the architecture. Using a controlled multivariate stochastic volatility process with cross-asset spillovers and varying volatility periods, we compare Transformers with MLP controls, vary training objectives and output heads, and analyze the learned computation using linear probes \citep{alain2016understanding}, stage-wise representation analysis, and targeted interventions \citep{meng2022locating, zhang2023activation, belrose2023leace}.

Our results reveal three main findings. First, across architectures and losses, both Transformers and MLPs exhibit a robust two-stage decomposition: hidden states encode substantial information about the next latent volatility state, while the output head maps that representation to squared return forecasts. Second, in Transformers, latent-state decodability emerges at identifiable architectural stages, and the stage at which it first appears depends on the volatility period. Long-cycle regimes become largely decodable immediately after $\ell^2$ normalization, whereas intermediate regimes require additional feed-forward refinement, and a short-period control regime depends strongly on positional embeddings. Third, in long-cycle regimes, the Transformer's latent-state computation simplifies into an explicit filter of the form in \Cref{eq:filter},
% \[
% \bm r_{1:T}\;\mapsto\;W\bm r_{1:T}\;\mapsto\;\frac{W\bm r_{1:T}}{\|W\bm r_{1:T}\|_2}\;\mapsto\;B\frac{W\bm r_{1:T}}{\|W\bm r_{1:T}\|_2}\mapsto\;\hat{\bm h}_{T+1},
% \]
showing that most latent-state information is extracted by a learned linear projection followed by nonlinear normalization. We further show that replacing the learned output head with a probe-aligned readout partially stabilizes performance under noisy MSE training, indicating that some forecast degradation arises from readout misalignment rather than failure to encode the latent state itself.

More broadly, we argue that stochastic volatility models offer a useful benchmark for mechanistic interpretability under noisy latent dynamics. Unlike many deterministic toy settings, they require implicit state inference under partial observability, but still permit controlled evaluation because the ground-truth latent state is known in simulation. This makes them a promising setting for studying how learned sequence models represent, transform, and use latent stochastic structure internally.

%=============================================================================
\section{Background and Setup}
\label{sec:background}

\paragraph{Multivariate stochastic volatility process.}
We use a controlled multivariate stochastic volatility (MSV) process \citep{harvey1994multivariate} with \(N=6\) assets, where returns $\bm r_t = [r_{1,t},\ldots,r_{N,t}]$ are noisy observations of a latent stochastic log-volatility state  \(\bm h_t=[h_{1,t},\ldots,h_{N,t}]^\top\):
\begin{equation}\label{eq:r_t}
    \bm r_t=\exp(\bm h_t/2)\odot \bm\varepsilon_t,
    \qquad 
    \bm\varepsilon_t\sim\mathcal N(\bm 0,\Sigma_\varepsilon),
\end{equation}
where \(\odot\) denotes elementwise multiplication. Thus,
\[
\mathbb E(\bm r_{t}^2 \mid \bm h_t)=\operatorname{Var}(\bm r_{t}\mid \bm h_t)=\exp(\bm h_{t}).
\] The latent log-volatility state follows
\[
    \bm h_{t+1}
    =
    \bm\mu + A(\bm h_t-\bm\mu)+\bm\eta_t,
    \qquad
    \bm\eta_t\sim\mathcal N(\bm 0,\Sigma_\eta),
\]
where the matrix $A$ governs cross-asset volatility spillovers. In the baseline calibration, $\bm{\mu} = -1$, $\Sigma_\varepsilon=I_N$ and $\Sigma_\eta=\sigma_\eta^2 I_N$ with
$\sigma_\eta=0.02$. To generate persistent cyclic spillovers between assets, we construct
\begin{equation}{\label{eq:A_matrix}}
    D=\operatorname{blkdiag}
\left(D_1(\omega),D_2(\omega),\ldots\right),
\qquad
D_j(\omega)=
\begin{pmatrix}
\cos(2\pi/\omega) & -\sin(2\pi/\omega)\\
\sin(2\pi/\omega) & \cos(2\pi/\omega)
\end{pmatrix},
\end{equation}
and set $A=QDQ^\top$ for an orthogonal matrix $Q$. We scale $A$ so that its largest absolute eigenvalue $\rho=0.999999$ in the main specifications. The period parameter
$\omega$ controls the cyclicity of the latent volatility dynamics and affects
cross-asset volatility correlations.

We split the simulated trajectory into $n=6000$ windows of length $L=22$, giving a sample size comparable to a realistic forecasting setting, using the first $T=21$ returns to predict the next-step squared returns:
\[
    y^{(k)}
    =
    \left(r_{1,T+1}^2,\ldots,r_{N,T+1}^2\right)^\top .
\]
Thus, each model  21 past returns for all assets and predicts the next
squared return for each asset.

\paragraph{Architectures.}
Our main model is a compact Transformer encoder with one layer and one attention head, chosen to facilitate interpretation. Each asset is treated as a token whose features are its length-$T$ return history. For asset $i$, the input $x_i\in\mathbb R^T$ is linearly embedded to $\mathbb{R}^{64}$, $\ell^2$-normalized, and combined with a learnable positional embedding:
\[
    \frac{x_i W_{\mathrm{in}}^\top}
    {\|x_i W_{\mathrm{in}}^\top\|_2}
    + p_i,
\]
The resulting token representations are passed through a single Transformer encoder layer with self-attention, a feed-forward network (FFN) with GELU activation and residual connections, and dropout. Let $z_i$ denote the final hidden representation for asset $i$. The output head maps $z_i$ to a positive variance forecast,
\[
    \widehat r_{i,T+1}^2
    =
    f\!\left(z_i W_{\mathrm{head}}^\top+b_{\mathrm{head}}\right),
\]
where $f$ is either the exponential or softplus function.

We compare the Transformer with an MLP baseline that processes each asset's return history independently, isolating the effect of Transformer-specific components such as attention-based cross-asset interaction.. The MLP uses three hidden layers of width 64 with GELU activations, together with the same output-head form as the Transformer. Full architectural details for both models are provided in \Cref{app:architecture}. Full architectural details for both the Transformer and MLP are provided in \Cref{app:architecture}.

% \paragraph{Loss functions.}
% We train models under two objectives. First, we use mean squared error (MSE) on
% next-step squared returns,
% % \[
% %     \mathcal L_{\mathrm{MSE}}
% %     =
% %     \frac{1}{N}\sum_{i=1}^N
% %     \left(\widehat r_{i,T+1}^2-r_{i,T+1}^2\right)^2 .
% % \]
% This objective treats squared returns as direct regression targets and does not
% use distributional knowledge of the MSV process.

% Second, when using the fact that returns are conditionally Gaussian with mean
% zero, we train using the Gaussian negative log-likelihood,
% \[
%     \mathcal L_{\mathrm{NLL}}
%     =
%     \frac{1}{N}\sum_{i=1}^{N}
%     \left[
%     \log(\widehat r_{i,T+1}^{2}+\delta)
%     +
%     \frac{r_{i,T+1}^{2}}{\widehat r_{i,T+1}^{2}+\delta}
%     \right],
% \]
% with $\delta=10^{-6}$ for numerical stability. Unlike MSE, this objective is
% directly aligned with conditional variance estimation: it is minimized when the
% predicted variance matches the true conditional variance. Comparing MSE and NLL
% therefore allows us to test whether latent-state recovery depends on the training
% objective or arises more generally from the architecture's need to solve the
% filtering problem implicit in MSV forecasting.

\paragraph{Loss functions.}
We train models with two objectives. The first is MSE on next-step squared returns, as \(\bm r_{T+1}^2\) is noisy proxy for conditional variance \(\mathbb E(\bm r_{t}^2 \mid \bm h_t)=\operatorname{Var}(\bm r_{t}\mid \bm h_t)\). The second is the Gaussian negative log-likelihood (NLL), which uses the fact that returns are mean-zero Gaussian:
\[
    \mathcal L_{\mathrm{NLL}}
    =
    \frac{1}{N}\sum_{i=1}^{N}
    \left[
    \log(\widehat r_{i,T+1}^{2}+\delta)
    +
    \frac{r_{i,T+1}^{2}}{\widehat r_{i,T+1}^{2}+\delta}
    \right],
\]
with \(\delta=10^{-6}\) for numerical stability. Unlike MSE, NLL \citep{nix1994estimating} is directly aligned with conditional variance estimation and is uniquely minimized when the predicted $\widehat{\bm r_{t}}^2$ matches the true conditional variance \(\mathbb E(\bm r_{t}^2 \mid \bm h_t)=\exp(\bm h_{t})\) shown in \Cref{app:nll_optimality}.
\section{Emergent Mechanisms in Latent Volatility Forecasting}
\label{sec:interpretation}
This section evaluates three hypotheses about the learned computation. 

\textbf{H1: Two-stage latent-state computation.}
The model first forms a hidden representation that encodes substantial information about the next latent log-volatility state \(\bm h_{T+1}\), and the output head then maps this representation to squared-return forecasts.

\textbf{H2: Stage-wise emergence in Transformers.}
In Transformers, latent-state decodability first appears at identifiable architectural stages, and the stage at which it emerges depends on the volatility period \(\omega\).

\textbf{H3: Explicit latent filter in long-cycle regimes.}
For sufficiently long volatility cycles, most latent-state information is extracted by a simple computation consisting of a learned linear projection followed by \(\ell^2\)-normalization, yielding an explicit latent-state filter.

\subsection{Robust Two-Stage Latent-State Decomposition}
\label{subsec:latent}

Across output activations and training objectives, both the Transformer and MLP exhibit a two-stage computation that mirrors the classical SV pipeline of latent-state inference followed by variance prediction. Because the observation equation is nonlinear, latent-state filtering in stochastic-volatility models is itself nontrivial and is typically carried out with approximate or simulation-based methods rather than by simple closed-form filtering \citep{jacquier1994bayesian,kim1998stochastic}.

\begin{enumerate}
    \item \textbf{Latent volatility inference:} The model first maps past returns to hidden representations $\bm{z}=[z_1,\ldots,z_N]\in\mathbb{R}^{N\times d_{\mathrm{model}}}$ that encode information about the next latent log-volatility state $\bm{h}_{T+1}$.
    
    \item \textbf{$\bm{r}_{T+1}^2$ Prediction.} The output head then maps this representation to forecast $\hat{\bm{r}}^2_{T+1}$.
\end{enumerate}

\paragraph{Linear probing.}
We fit a linear ridge-probe \citep{alain2016understanding} from each asset representation $z_i$ to the corresponding MSV model's latent state $h_{i,T+1}$. If the first stage performs latent-volatility inference, then $h_{i,T+1}$ should be linearly decodable from $z_i$. We also compare this probe with the latent-state fit induced by the learned linear head readout $z_i W_{\mathrm{head}}^{\top}$, which measures how well the output head uses the encoded latent information. Results for $\omega=365$ corresponding to year-long volatility cycle are shown in \Cref{tab:probe_head}.

\begin{table}[t]
\centering
\caption{Latent-state decodability and output-head alignment across activation functions, model architectures, and training objectives. Entries report median [IQR] over $n=10$ seeds.}
\label{tab:probe_head}
\small
\begin{tabular}{lllcc}
\toprule
Act. & Model & Loss & Ridge Probe $R^2$ & Learned Readout $R^2$ \\
\midrule
\multirow{4}{*}{exp}
& Transformer & NLL & 0.9069 \iqr{0.8827, 0.9175} & 0.8975 \iqr{0.8733, 0.9065} \\
& Transformer & MSE & 0.8794 \iqr{0.8699, 0.8929} & 0.7539 \iqr{0.7151, 0.7846} \\
& MLP         & NLL & 0.8985 \iqr{0.8800, 0.9098} & 0.8899 \iqr{0.8754, 0.9003} \\
& MLP         & MSE & 0.8034 \iqr{0.7789, 0.8332} & 0.7543 \iqr{0.7232, 0.7975} \\
\midrule
\multirow{4}{*}{spl}
& Transformer & NLL & 0.9007 \iqr{0.8844, 0.9122} & 0.8191 \iqr{0.7783, 0.8407} \\
& Transformer & MSE & 0.8810 \iqr{0.8684, 0.8963} & 0.8011 \iqr{0.6929, 0.8119} \\
& MLP         & NLL & 0.8863 \iqr{0.8660, 0.9053} & 0.7933 \iqr{0.7542, 0.8110} \\
& MLP         & MSE & 0.8211 \iqr{0.7811, 0.8436} & 0.7207 \iqr{0.6032, 0.7473} \\
\bottomrule
\end{tabular}
\end{table}

The probe $R^2$ remains high across architectures, losses, and activations, showing that $\bm{z}$ linearly encodes substantial information about $\bm{h}_{T+1}$. In contrast, the learned readout $R^2$ is considerably more sensitive to both the training objective and output activation. Under the softplus activation, alignment is consistently weaker, likely because it is mismatched with the exponential observation model in \Cref{eq:r_t}. Under MSE training, the learned output head often exhibits substantially lower $R^2$ than the probe despite strong latent-state decodability in $\bm{z}$. This suggests that part of the forecast degradation arises from readout misalignment rather than failure to encode the latent state itself.

\paragraph{Output-head intervention.}
We test whether the alignment gap is functionally relevant by replacing the learned output head with a ridge-probe readout trained on the final-layer representation $\bm{z}$. Under MSE training with exponential activation, the probe readout consistently improves MAE across volatility periods for both architectures, with larger gains for the Transformer in most regimes. MSE gains are more mixed, suggesting that the intervention primarily stabilizes typical forecast error rather than uniformly improving tail-sensitive squared error. The full alignment-gap and output-head intervention analysis are reported in \Cref{app:r2_gap}, and intervention results are summarized in \Cref{tab:repair}.

\begin{table}[H]
\centering
\caption{Output-head intervention under MSE training with exponential activation for selected $\omega$. Entries report median MSE and MAE over $n=10$ seeds for the learned head and a ridge readout from the final-layer representation. Positive gain indicates improvement. MAE improves consistently, while MSE effects are mixed.}
% the full table is given in \Cref{tab:full_repair}.}
\label{tab:repair}
\small
\begin{tabular}{llcccccc}
\toprule
 & & \multicolumn{3}{c}{MSE} & \multicolumn{3}{c}{MAE} \\
\cmidrule(lr){3-5}\cmidrule(lr){6-8}
Model & $\omega$ & Original & Probe & Gain & Original & Probe & Gain \\
\midrule
\multirow{3}{*}{Transformer}
 & 30  & 4.422 & 4.382 & $-1.39\%$ & 0.873 & 0.836 & $+5.28\%$ \\
 % & 90  & 3.138 & 3.204 & $-0.81\%$ & 0.758 & 0.742 & $+2.44\%$ \\
 & 178 & 7.470 & 7.469 & $-0.11\%$ & 0.987 & 0.958 & $+1.29\%$ \\
 & 365 & 5.158 & 5.128 & $+0.47\%$ & 0.856 & 0.853 & $+0.66\%$ \\
 % & 730 & 7.571 & 7.542 & $+0.52\%$ & 0.965 & 0.934 & $+1.51\%$ \\
\midrule
\multirow{3}{*}{MLP}
 & 30  & 4.384 & 4.397 & $-1.34\%$ & 0.856 & 0.834 & $+3.89\%$ \\
 % & 90  & 3.152 & 3.175 & $-0.20\%$ & 0.750 & 0.739 & $+3.30\%$ \\
 & 178 & 7.433 & 7.446 & $-0.87\%$ & 0.994 & 0.954 & $+3.05\%$ \\
 & 365 & 5.192 & 5.123 & $-0.01\%$ & 0.864 & 0.859 & $+1.18\%$ \\
 % & 730 & 7.535 & 7.714 & $-0.62\%$ & 0.939 & 0.949 & $+0.73\%$ \\
\bottomrule
\end{tabular}
\end{table}

% \paragraph{Signal-to-Noise Ratio.} The spectral radius $\rho$ of the state-transition matrix $A$ controls the signal-to-noise ratio in the latent process $\bm h_t$. When $\rho$ is small, $\bm h_t$ decays quickly toward zero and the observations are largely noise-dominated. As $\rho$ approaches $1$, the latent dynamics become more persistent and structured, making $\bm h_{T+1}$ easier to linearly decode from past observations, as shown in \Cref{app:signal_to_noise}.

\subsection{Stage-wise Emergence of Latent-State Decodability}{\label{subsec:latent-state emergence}}

In this section, we identify where inside the Transformer the latent state
$\bm{h}_{T+1}$ first becomes linearly decodable from the intermediate
representation.  We find that the earliest stage at which $\bm{h}_{T+1}$ is
linearly recoverable depends on the volatility period $\omega$ of
the AR component in \Cref{eq:A_matrix}.

\Cref{fig:stage_probe} reports the Ridge $R^2$ for predicting $h_{t+1}$ at
each component of the Transformer. For long-period dynamics ($\omega \in \{178,365\}$), linear decodability already emerges after $\ell^2$ normalization. For intermediate periods ($\omega \in \{30,90\}$), the activation in the FFN also produces substantial additional gains, suggesting that the FFN refines representations that attention alone leaves only partially structured. We also include a control regime with $\omega=21$, where the oscillation period matches the observation window length $T$ and one can memorize $h_{T+1}$ by periodicity. In this case, decodability remains near zero until positional embeddings are added, after which $R^2$ jumps sharply to approximately $1$. Attention contributes little to all periods other than $\omega = 30$.
\begin{figure}[t]
    \centering
    \begin{subfigure}[t]{0.54\linewidth}
        \centering
        \includegraphics[height=5cm]{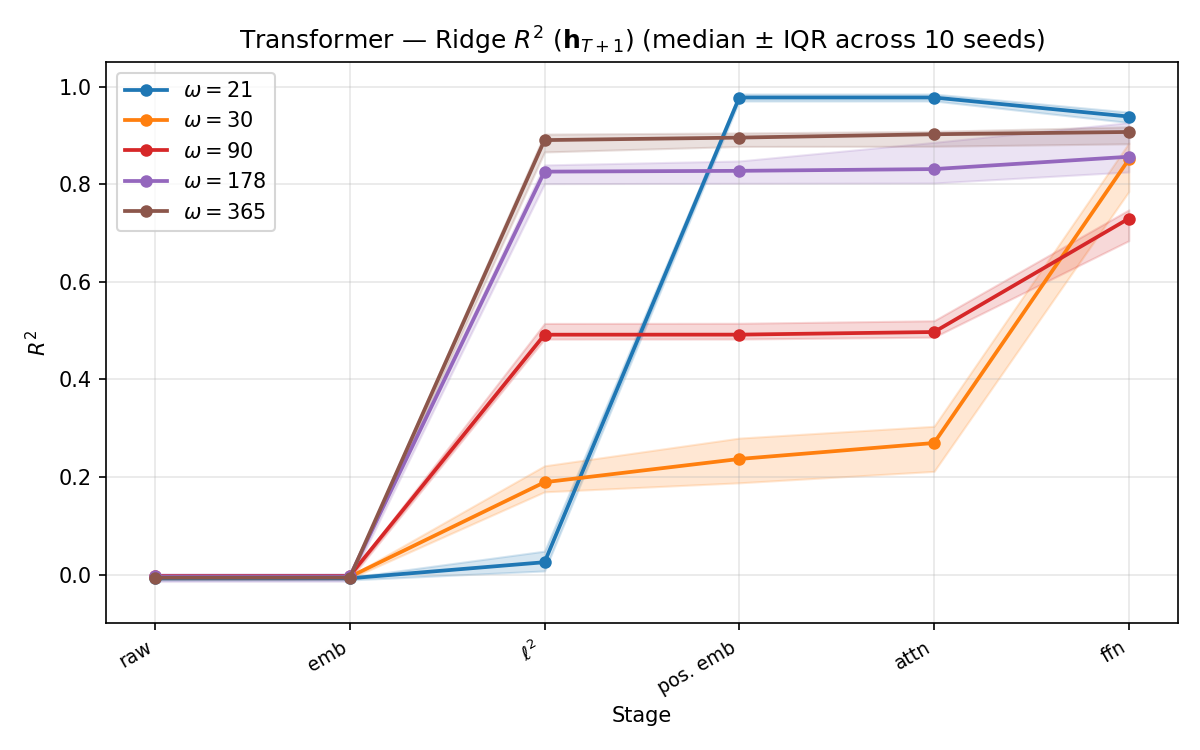}
        % \caption{Stage-wise ridge probe \(R^2\) for decoding \(h_{t+1}\) from intermediate Transformer representations across volatility periods \(\omega\).}
        \label{fig:stage_probe}
    \end{subfigure}
    \hfill
    \begin{subfigure}[t]{0.45\linewidth}
        \centering
        \includegraphics[height=5cm]{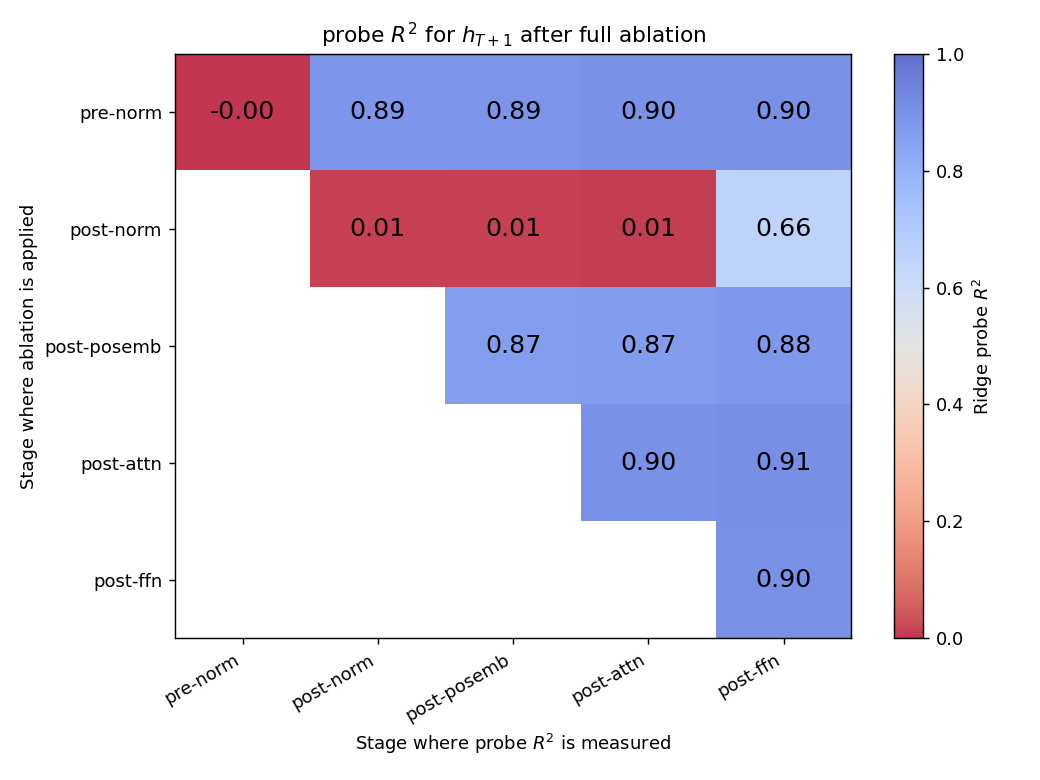}
        % \caption{Cross-stage effect of full aligned-direction ablation for \(\omega=365\). Rows indicate the ablation stage and columns the stage where ridge probe \(R^2\) for \(\bm h_{T+1}\) is measured after the perturbed forward pass.}
        \label{fig:cross_stage_ablation}
    \end{subfigure}
    \caption{Stage-wise emergence and causal relevance of latent-state decodability in the Transformer. Left: latent-state decodability emerges at different architectural stages depending on the volatility period \(\omega\). Right: for \(\omega=365\), ablating the post-\(\ell^2\)-norm representation produces the strongest downstream collapse in decodability, identifying it as the earliest causally relevant stage.}
    \label{fig:stage_probe_and_ablation}
\end{figure}

\paragraph{Less Structured Emergence in the MLP.} By contrast, the MLP exhibits a less modular stage-wise pattern: latent-state decodability rises after the first activation/linear block and then improves gradually across layers, rather than emerging at distinct architecture-specific stages. Even with an inserted $\ell^2$-normalization step, the MLP only partially mirrors the Transformer, with linear embedding and $\ell^2$ normalization accounting for most of the decodability in long-cycle regimes (\Cref{app:mlp_stage_probe}).
% As $\omega$ decreases, the dominant contribution also shifts from $\ell^2$-normalization alone to a shared contribution from $\ell^2$-normalization and the subsequent nonlinear activation.

\paragraph{Multi-Layer Transformer} In the two-layer Transformer, most latent-state computation still occurs early in the first layer in a similar manner, and the second layer contributes only modest refinement. Detailed layer-wise results are shown in \Cref{app:2_layer_transformer_stage_probe}.

\subsection{Explicit Latent Filters in Long-Cycle Regimes}
\label{subsec:explicit filter}

For long-cycle dynamics ($\omega \geq 365$), \Cref{fig:stage_probe} reveals a simple mechanism: Ridge $R^2$ for decoding $h_{t+1}$ jumps to approximately $0.9$ from $0$ after $\ell^2$ normalization, and changes little afterwards. Thus, most latent-state information is extracted by a single nonlinear normalization step applied after a learned linear projection:
\begin{equation}{\label{eq:filter}}
    \hat h_{i, t+1}
  =
  \underbrace{
  \frac{x_i W_{\mathrm{in}}^\top}
    {\|x_i W_{\mathrm{in}}^\top\|_2}
  }_{\ell^2\text{-normalized embedding}} 
  \underbrace{B}_{\text{ridge probe}},
\end{equation}
where $W_{\mathrm{in}}$ is the learned linear embedding and $B$ is the probe readout. We refer to this composition as an \emph{explicit latent-state filter}.

% \paragraph{Contrast with shorter cycles.}
% The explicit filter is specific to long-cycle regimes. For intermediate periods ($\omega=30,90$), $L^2$ normalization alone yields only weak decodability, and much of the improvement occurs after the feed-forward network. In these regimes, the latent phase changes appreciably within the observation window, so a static normalized embedding is insufficient; additional nonlinear processing is needed. 
% The control case $\omega=21$, where the cycle length matches the input window, behaves differently again: decodability remains near zero until positional embeddings are added, indicating that time-location information is essential in this regime.

\paragraph{Causal Perturbation of the Normalized Embedding} In \Cref{app:cross_stage_ablation}, we  show that the learned projection and the $\ell^2$ normalization step are jointly necessary.  For $\omega=365$, we ablate the component aligned with the ridge-probe direction at different stages and propagate the perturbed activation through the rest of the network. Ablating the post-$\ell^2$-norm representation causes a broad downstream collapse in decodability shown in \Cref{fig:cross_stage_ablation}, with only partial recovery at the FFN, whereas later ablations have much weaker effects. This identifies the $\ell^2$-normalized embedding as the earliest stage at which the latent-state signal becomes causally relevant for downstream computation. Further details are descried in \Cref{app:cross_stage_ablation}.

% \begin{figure}[H]
%     \centering
%     \includegraphics[width=0.45\linewidth]{pics/cross_stage_ablation.png}
%     \caption{Cross-stage effect of full aligned-direction ablation for $\omega=365$. Rows indicate the ablation stage and columns the stage where ridge probe $R^2$ for $\bm h_{T+1}$ is measured after the perturbed forward pass.}
%     \label{fig:cross_stage_ablation}
% \end{figure}

\bibliographystyle{plainnat}
\bibliography{ref}

\newpage
\appendix
\section{Architectural Details}
\label{app:architecture}

This section provides the full architectural details for the Transformer model and the MLP baseline used in the experiments.

\subsection{Transformer Architecture}
\label{app:transformer}

We model the conditional variance of asset returns using a Transformer encoder \citep{vaswani2017attention} that operates across assets. Given a panel of past returns
\[
x \in \mathbb{R}^{N \times T},
\]
where $x_i \in \mathbb{R}^T$ denotes the return history of asset $i$ over the previous $T$ time steps, each asset is treated as one token whose feature vector is its return trajectory.

The Transformer consists of three components: a linear embedding layer, a single Transformer encoder layer, and an output head. We use one attention head and one encoder layer throughout. Dropout with rate $0.1$ is applied in the model.

\paragraph{Linear embedding.}
For each asset $i$, the return history $x_i \in \mathbb{R}^{T}$ is first mapped into a $d_{\mathrm{model}}$-dimensional representation by a linear projection,
\begin{equation}
    \tilde{e}_i = x_i W_{\mathrm{in}}^\top,
\end{equation}
where $W_{\mathrm{in}} \in \mathbb{R}^{d_{\mathrm{model}} \times T}$. We then normalize the embedded representation by its $\ell^2$ norm,
\begin{equation}
    \tilde{e}_{i,\mathrm{norm}}
    =
    \frac{\tilde{e}_i}{\|\tilde{e}_i\|_2},
\end{equation}
and add a learnable asset-specific embedding,
\begin{equation}
    e_i = \tilde{e}_{i,\mathrm{norm}} + p_i,
\end{equation}
where $p_i \in \mathbb{R}^{d_{\mathrm{model}}}$ identifies asset $i$. In all experiments, we set $d_{\mathrm{model}}=64$.

\paragraph{Transformer encoder.}
Let
\[
H = [e_1,\ldots,e_N]
\]
denote the matrix of asset-token representations. The tokens are processed by a single Transformer encoder layer with one self-attention head, followed by a feed-forward network with GELU activation and residual connections. Specifically,
\begin{equation}
    z_i
    =
    \left[
    \mathrm{GELU}
    \left(
    \left(
    h_i + \mathrm{Attn}(H)
    \right) W_1^\top + b_1
    \right)
    \right] W_2^\top + b_2,
\end{equation}
where $\mathrm{Attn}(H)$ denotes the self-attention operator and GELU denotes the Gaussian error linear unit \citep{hendrycks2016gaussian}. We do not include layer normalization in the main specification, since it had little effect on forecasting performance in our experiments and removing it simplifies the interpretation of hidden representations.

\paragraph{Output head.}
A linear prediction head maps the final representation of each asset into a scalar variance forecast. For asset $i$,
\begin{equation}
\label{eq:output_linear_app}
    s_i = z_i W_{\mathrm{head}}^\top + b_{\mathrm{head}},
\end{equation}
and the predicted next-period squared return is
\begin{equation}
\label{eq:output_act_app}
    \widehat r_{i,T+1}^2 = f(s_i),
\end{equation}
where $W_{\mathrm{head}} \in \mathbb{R}^{1 \times d_{\mathrm{model}}}$ and $b_{\mathrm{head}}\in\mathbb{R}$. We consider both exponential and softplus choices for the output activation $f$ to ensure positive variance forecasts.

\subsection{MLP Baseline}
\label{app:mlp}

We use an MLP baseline to distinguish latent-state recovery through nonlinear univariate forecasting from recovery due to attention-based cross-asset aggregation. Unlike the Transformer, the MLP processes each asset independently and does not include asset-specific embeddings, self-attention, or cross-asset token interactions.

For each asset $i$, the input is its length-$T$ return history
$x_i \in \mathbb{R}^T$. The MLP consists of three hidden layers with GELU activations with width 64. Formally, the model computes
\begin{align}
    h_i^{(0)} &= x_i, \\
    a_i^{(\ell)}
    &=
    h_i^{(\ell-1)} W_\ell^\top + b_\ell,
    \qquad \ell = 1,2,3, \\
    h_i^{(\ell)}
    &=
    \mathrm{GELU}\left(a_i^{(\ell)}\right),
    \qquad \ell = 1,2,3, \\
    z_i &= h_i^{(3)} .
\end{align}

The output layer follows the same prediction-head structure as in the Transformer. Specifically,
\begin{equation}
    s_i = z_i W_{\mathrm{head}}^\top + b_{\mathrm{head}},
\end{equation}
and the predicted next-period squared return is
\begin{equation}
    \widehat r_{i,T+1}^2 = f(s_i),
\end{equation}
where $f$ is chosen to be either the exponential or softplus function to ensure positive squared return forecasts. The dropout configuration is kept the same as in the Transformer.

\section{Spillover matrix $A$}
\label{app:A_matrix}

We report the spillover matrix $A$ from \Cref{eq:A_matrix} for all volatility periods $\omega \in \{21,30,90,178,365,730\}$, denoted as $A_{\omega}$, considered in the paper.

\[
A_{21} =
\begin{pmatrix}
 0.9556 & -0.1398 &  0.0983 &  0.1905 & -0.1280 &  0.0710 \\
 0.1398 &  0.9556 & -0.2398 &  0.0852 & -0.0405 &  0.0304 \\
-0.0983 &  0.2398 &  0.9556 &  0.1167 & -0.0731 &  0.0276 \\
-0.1905 & -0.0852 & -0.1167 &  0.9556 & -0.0810 & -0.1523 \\
 0.1280 &  0.0405 &  0.0731 &  0.0810 &  0.9556 & -0.2387 \\
-0.0710 & -0.0304 & -0.0276 &  0.1523 &  0.2387 &  0.9556
\end{pmatrix}
\]

\[
A_{30} =
\begin{pmatrix}
 0.9781 & -0.0986 &  0.0693 &  0.1343 & -0.0903 &  0.0500 \\
 0.0986 &  0.9781 & -0.1692 &  0.0601 & -0.0286 &  0.0215 \\
-0.0693 &  0.1692 &  0.9781 &  0.0823 & -0.0515 &  0.0194 \\
-0.1343 & -0.0601 & -0.0823 &  0.9781 & -0.0571 & -0.1074 \\
 0.0903 &  0.0286 &  0.0515 &  0.0571 &  0.9781 & -0.1684 \\
-0.0500 & -0.0215 & -0.0194 &  0.1074 &  0.1684 &  0.9781
\end{pmatrix}
\]

\[
A_{90} =
\begin{pmatrix}
 0.9976 & -0.0331 &  0.0233 &  0.0451 & -0.0303 &  0.0168 \\
 0.0331 &  0.9976 & -0.0568 &  0.0202 & -0.0096 &  0.0072 \\
-0.0233 &  0.0568 &  0.9976 &  0.0276 & -0.0173 &  0.0065 \\
-0.0451 & -0.0202 & -0.0276 &  0.9976 & -0.0192 & -0.0360 \\
 0.0303 &  0.0096 &  0.0173 &  0.0192 &  0.9976 & -0.0565 \\
-0.0168 & -0.0072 & -0.0065 &  0.0360 &  0.0565 &  0.9976
\end{pmatrix}
\]

\[
A_{178} =
\begin{pmatrix}
 0.9994 & -0.0167 &  0.0118 &  0.0228 & -0.0153 &  0.0085 \\
 0.0167 &  0.9994 & -0.0287 &  0.0102 & -0.0048 &  0.0036 \\
-0.0118 &  0.0287 &  0.9994 &  0.0140 & -0.0087 &  0.0033 \\
-0.0228 & -0.0102 & -0.0140 &  0.9994 & -0.0097 & -0.0182 \\
 0.0153 &  0.0048 &  0.0087 &  0.0097 &  0.9994 & -0.0286 \\
-0.0085 & -0.0036 & -0.0033 &  0.0182 &  0.0286 &  0.9994
\end{pmatrix}
\]

\[
A_{365} =
\begin{pmatrix}
 0.9999 & -0.0082 &  0.0057 &  0.0111 & -0.0075 &  0.0041 \\
 0.0082 &  0.9999 & -0.0140 &  0.0050 & -0.0024 &  0.0018 \\
-0.0057 &  0.0140 &  0.9999 &  0.0068 & -0.0043 &  0.0016 \\
-0.0111 & -0.0050 & -0.0068 &  0.9999 & -0.0047 & -0.0089 \\
 0.0075 &  0.0024 &  0.0043 &  0.0047 &  0.9999 & -0.0139 \\
-0.0041 & -0.0018 & -0.0016 &  0.0089 &  0.0139 &  0.9999
\end{pmatrix}
\]

\[
A_{730} =
\begin{pmatrix}
 1.0000 & -0.0041 &  0.0029 &  0.0056 & -0.0037 &  0.0021 \\
 0.0041 &  1.0000 & -0.0070 &  0.0025 & -0.0012 &  0.0009 \\
-0.0029 &  0.0070 &  1.0000 &  0.0034 & -0.0021 &  0.0008 \\
-0.0056 & -0.0025 & -0.0034 &  1.0000 & -0.0024 & -0.0044 \\
 0.0037 &  0.0012 &  0.0021 &  0.0024 &  1.0000 & -0.0070 \\
-0.0021 & -0.0009 & -0.0008 &  0.0044 &  0.0070 &  1.0000
\end{pmatrix}
\]

\section{Optimality of Gaussian NLL for Conditional Variance}
\label{app:nll_optimality}

\begin{theorem}
Fix an asset \(i\) and condition on the latent state \(\bm h_{T+1}\). Under the observation model in \Cref{eq:r_t},
\[
r_{i,T+1}\mid \bm h_{T+1}\sim \mathcal N\!\left(0,\exp(h_{i,T+1})\right).
\]
Consider the per-asset Gaussian negative log-likelihood
\[
\ell\!\left(\widehat r_{i,T+1}^2; r_{i,T+1}\right)
=
\log \widehat r_{i,T+1}^2
+
\frac{r_{i,T+1}^2}{\widehat r_{i,T+1}^2},
\qquad \widehat r_{i,T+1}^2>0.
\]
Then the conditional expected loss
\[
\mathbb E\!\left[
\ell\!\left(\widehat r_{i,T+1}^2; r_{i,T+1}\right)
\mid \bm h_{T+1}
\right]
\]
is uniquely minimized at
\[
\widehat r_{i,T+1}^2=\exp(h_{i,T+1}).
\]
\end{theorem}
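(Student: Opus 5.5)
The plan is to reduce the conditional expected loss to a deterministic scalar function of the forecast and minimize it directly. Write $v=\widehat r_{i,T+1}^2>0$ for the (fixed, $\bm h_{T+1}$-measurable) forecast and $\sigma^2=\exp(h_{i,T+1})$. First I will justify the conditional distribution: by \Cref{eq:r_t}, $r_{i,T+1}=\exp(h_{i,T+1}/2)\,\varepsilon_{i,T+1}$, and $\varepsilon_{i,T+1}\sim\mathcal N(0,1)$ because $\Sigma_\varepsilon=I_N$. The noise $\bm\varepsilon_{T+1}$ is independent of $\bm h_{T+1}$, since $\bm h_{T+1}$ depends only on earlier $\bm\eta$'s and the initial state. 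Hence $r_{i,T+1}\mid\bm h_{T+1}\sim\mathcal N(0,\sigma^2)$, and so $\mathbb E[r_{i,T+1}^2\mid\bm h_{T+1}]=\sigma^2$.

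Next, by linearity of conditional expectation, with $v$ treated as a constant given $\bm h_{T+1}$,
\[
g(v):=\mathbb E\bigl[\ell(v;r_{i,T+1})\mid\bm h_{T+1}\bigr]=\log v+\frac{\sigma^2}{v}.
\]
This is finite for every $v>0$, because the second moment of a Gaussian exists. The problem thus becomes minimizing $g$ on $(0,\infty)$.

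I will then show that $g$ has a unique global minimizer at $v=\sigma^2$. The cleanest route is the substitution $u=\sigma^2/v>0$, which gives $g(v)=\log\sigma^2+u-\log u$. Since $u-\log u\ge 1$, with equality iff $u=1$ (from $\log u\le u-1$ with equality only at $u=1$), we get $g(v)\ge \log\sigma^2+1$, with equality iff $v=\sigma^2=\exp(h_{i,T+1})$. Strictness of the inequality for $u\neq1$ gives uniqueness. As an alternative check, $g'(v)=(v-\sigma^2)/v^2$ is negative on $(0,\sigma^2)$ and positive on $(\sigma^2,\infty)$, so $g$ is strictly decreasing and then strictly increasing.

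The only step needing care is the first one. I must make precise that the forecast is held fixed given $\bm h_{T+1}$ while $\varepsilon_{i,T+1}$ remains independent of $\bm h_{T+1}$ with unit variance, which uses the diagonal entry of $\Sigma_\varepsilon$. Off-diagonal correlations would not matter for this per-asset statement. I would also remark that the $\delta$ in $\mathcal L_{\mathrm{NLL}}$ only shifts the minimizer to $\exp(h_{i,T+1})-\delta$ on the region where that value is positive, and that the theorem as stated uses $\delta=0$.
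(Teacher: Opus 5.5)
Your proposal is correct and follows essentially the same route as the paper: use $\mathbb E[r_{i,T+1}^2\mid\bm h_{T+1}]=\exp(h_{i,T+1})$ to reduce the conditional expected loss to $J(v)=\log v+\exp(h_{i,T+1})/v$, then minimize over $v>0$. The paper minimizes via the unique critical point and a positive second derivative there. Your bound $u-\log u\ge 1$, or the sign pattern of $g'$, gives global uniqueness more directly, and your remarks on independence of $\bm\varepsilon_{T+1}$ and on the $\delta$-shifted minimizer $\exp(h_{i,T+1})-\delta$ are accurate but not needed for the statement as given.
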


\begin{proof}
Conditioning on \(\bm h_{T+1}\), we have
\[
\mathbb E\!\left[r_{i,T+1}^2 \mid \bm h_{T+1}\right]
=
\operatorname{Var}(r_{i,T+1}\mid \bm h_{T+1})
=
\exp(h_{i,T+1}).
\]
Therefore,
\[
\mathbb E\!\left[
\ell\!\left(\widehat r_{i,T+1}^2; r_{i,T+1}\right)
\mid \bm h_{T+1}
\right]
=
\log \widehat r_{i,T+1}^2
+
\frac{\exp(h_{i,T+1})}{\widehat r_{i,T+1}^2}.
\]
Define
\[
J(v)=\log v+\frac{\exp(h_{i,T+1})}{v},
\qquad v>0.
\]
Then
\[
J'(v)=\frac{1}{v}-\frac{\exp(h_{i,T+1})}{v^2}
=\frac{v-\exp(h_{i,T+1})}{v^2}.
\]
Thus the unique critical point is
\[
v=\exp(h_{i,T+1}).
\]
Moreover,
\[
J''(v)
=
-\frac{1}{v^2}
+
\frac{2\exp(h_{i,T+1})}{v^3},
\]
so at \(v=\exp(h_{i,T+1})\),
\[
J''\!\left(\exp(h_{i,T+1})\right)
=
\frac{1}{\exp(2h_{i,T+1})}>0.
\]
Hence \(
\widehat r_{i,T+1}^2=\exp(h_{i,T+1})
\)
uniquely minimizes the conditional expected Gaussian NLL.
\end{proof}

\section{Consistent Probe--Readout $R^2$ Gap Under MSE Loss}
\label{app:r2_gap}

In \Cref{tab:probe_head}, we reported a substantial gap between the hidden-state probe $R^2$ and the linear head $R^2$ at $\omega=365$. \Cref{fig:r2_gap_sweep} shows that this gap persists across all volatility periods $\omega \in \{30,90,178,365,730\}$ for both the Transformer and the MLP under MSE training. For both architectures, the hidden probe $R^2$ is consistently higher than the linear head $R^2$. The gap narrows as $\omega$ increases, and both probe $R^2$ and head $R^2$ improve with longer volatility cycles. The Transformer exhibits a consistently smaller alignment gap than the MLP, especially at $\omega=30$ and $\omega=90$, suggesting that under MSE training its learned head remains better aligned with the latent-state direction encoded in the hidden representation.

\begin{figure}[H]
    \centering
    \includegraphics[width=\linewidth]{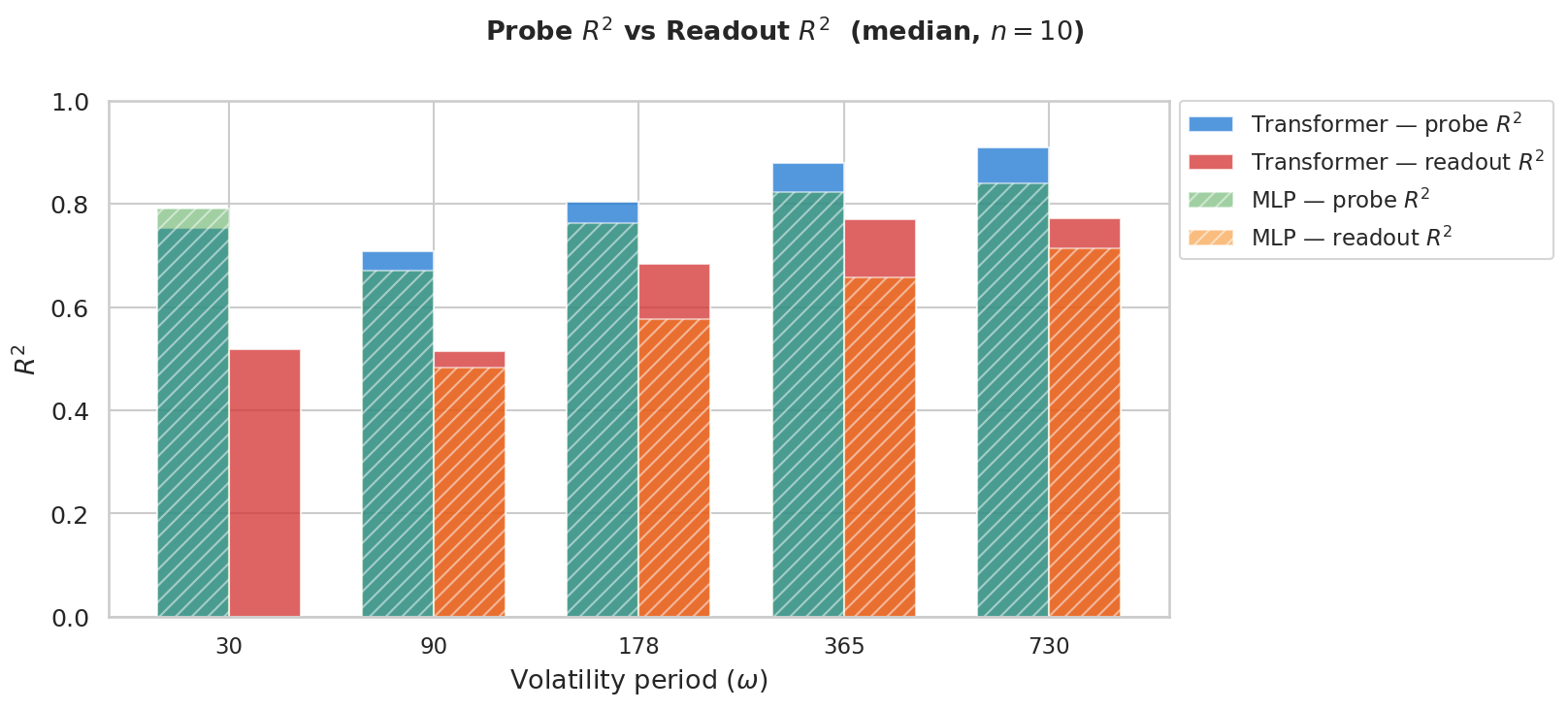}
    \caption{Probe $R^2$ and linear head $R^2$ for decoding $\bm h_{T+1}$ from the final representation under MSE training, shown across volatility periods for the Transformer and MLP (median over $n=10$ seeds). In both architectures, probe $R^2$ remains above head $R^2$ across all periods, with smaller in long-cycle regimes. The $R^2$ in Transformer are consistently higher.}
    \label{fig:r2_gap_sweep}
\end{figure}

% We also report the full version of \Cref{tab:repair} in \Cref{tab:full_repair}, covering a wider range of $\omega$. Across volatility periods, replacing the learned linear output head with a ridge readout consistently improves MAE, while the effect on MSE remains mixed.
% \begin{table}[H]
% \centering
% \label{tab:full_repair}
% \small
% \begin{tabular}{llcccccc}
% \toprule
%  & & \multicolumn{3}{c}{MSE} & \multicolumn{3}{c}{MAE} \\
% \cmidrule(lr){3-5}\cmidrule(lr){6-8}
% Model & $\omega$ & Original & Probe & Gain & Original & Probe & Gain \\
% \midrule
% \multirow{5}{*}{Transformer}
%  & 30  & 4.074 & 4.311 & $-3.19\%$ & 0.848 & 0.812 & $+4.44\%$ \\
%  & 90  & 3.651 & 3.731 & $-3.11\%$ & 0.835 & 0.787 & $+3.55\%$ \\
%  & 178 & 3.506 & 3.514 & $+0.49\%$ & 0.775 & 0.746 & $+3.40\%$ \\
%  & 365 & 3.487 & 3.447 & $+0.60\%$ & 0.781 & 0.758 & $+2.01\%$ \\
%  & 730 & 3.463 & 3.453 & $+0.42\%$ & 0.792 & 0.765 & $+1.61\%$ \\
% \midrule
% \multirow{5}{*}{MLP}
%  & 30  & 4.096 & 4.212 & $-2.69\%$ & 0.837 & 0.803 & $+3.00\%$ \\
%  & 90  & 3.658 & 3.757 & $-1.01\%$ & 0.857 & 0.808 & $+3.92\%$ \\
%  & 178 & 3.498 & 3.536 & $-1.66\%$ & 0.807 & 0.778 & $+2.53\%$ \\
%  & 365 & 3.398 & 3.536 & $-3.18\%$ & 0.803 & 0.794 & $+1.87\%$ \\
%  & 730 & 3.426 & 3.519 & $-2.24\%$ & 0.810 & 0.799 & $+0.50\%$ \\
% \bottomrule
% \end{tabular}
% \end{table}

\section{Less Structured Emergence in the MLP}
\label{app:mlp_stage_probe}

In the Transformer, the stage at which the latent state $\bm{h}_{T+1}$ first becomes linearly decodable depends strongly on the volatility cycle as shown in \Cref{fig:stage_probe}. This stage-specific pattern is much less pronounced in a standard MLP. As shown in \Cref{fig:mlp_stage_probe}, latent-state decodability in the MLP typically arises in a more diffuse and less modular manner than the Transformer. It arises after the first nonlinear block and then improves more gradually across subsequent layers, rather than emerging sharply at architecture-specific components. 

\begin{figure}[H]
    \centering
    \includegraphics[width=0.75\linewidth]{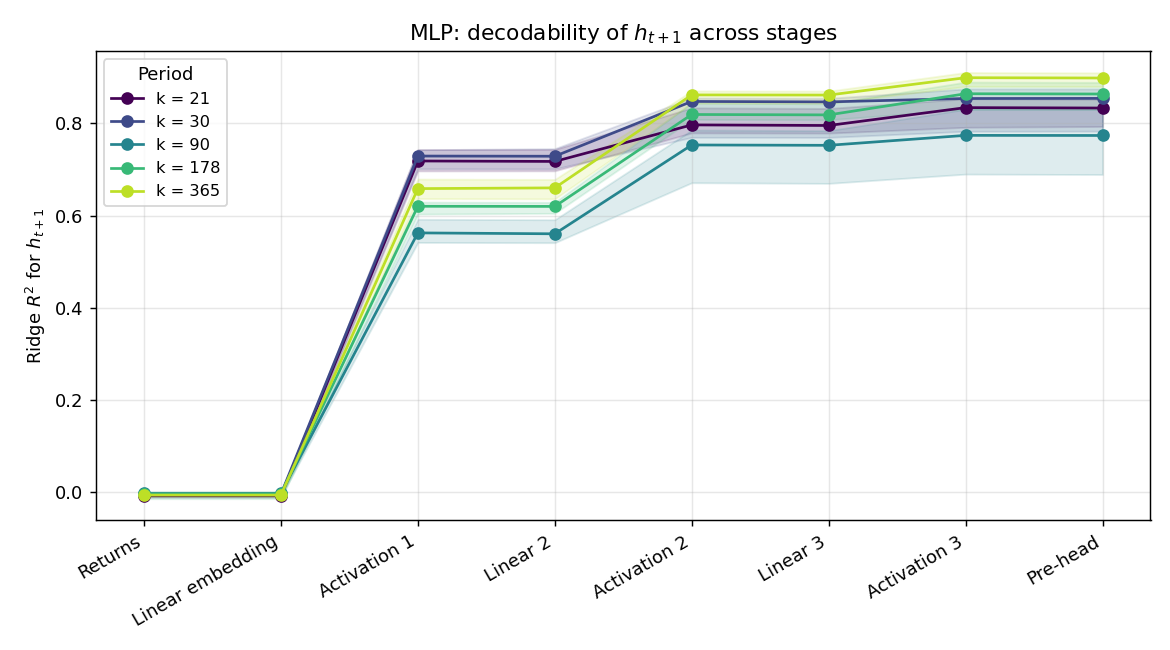}
    \caption{Stage-wise ridge probe $R^2$ for decoding $\bm{h}_{T+1}$ from intermediate MLP representations across volatility periods. Unlike the Transformer, the MLP exhibits a smoother and less localized rise in decodability, with weaker dependence on volatility period.}
    \label{fig:mlp_stage_probe}
\end{figure}

We also consider an MLP control with an $\ell^2$-normalization layer inserted after the initial linear embedding, mirroring the corresponding preprocessing step in the Transformer. As shown in \Cref{fig:mlp_normed_stage_probe}, this normalized MLP exhibits a similar pattern: in long-cycle regimes, the linear embedding together with $\ell^2$-normalization accounts for most of the decodability of $h_{t+1}$. As $\omega$ decreases, the dominant contribution shifts from $\ell^2$-normalization alone to a shared contribution from $\ell^2$-normalization and the subsequent nonlinear activation.

\begin{figure}[H]
    \centering
    \includegraphics[width=0.75\linewidth]{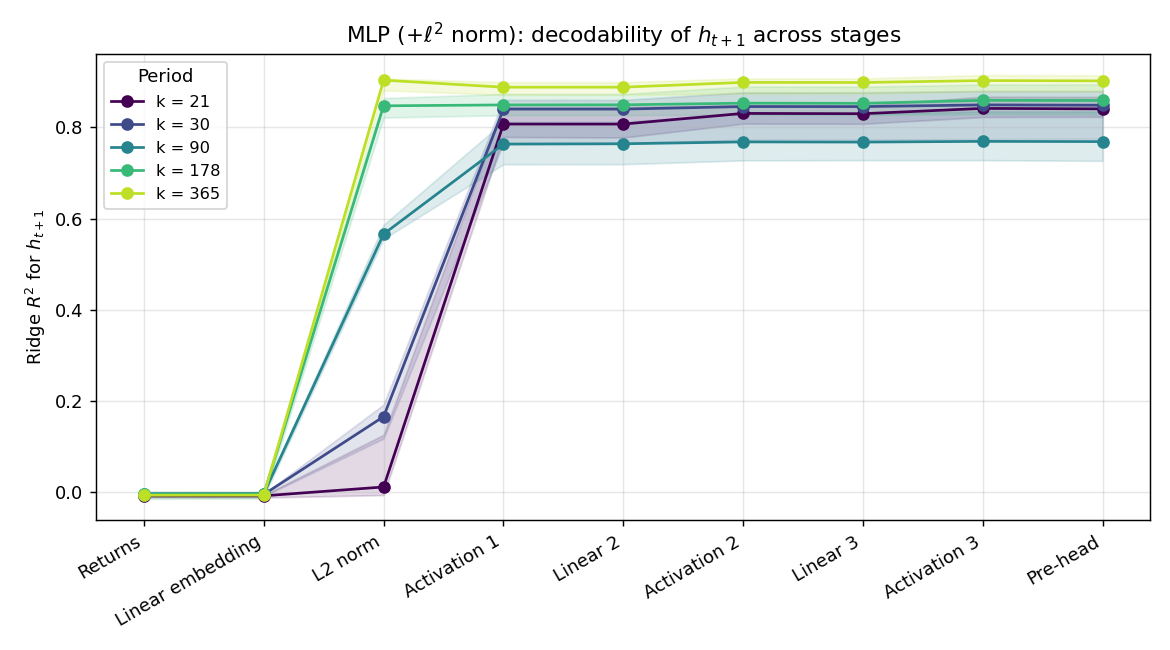}
    \caption{Stage-wise ridge probe $R^2$ for decoding $\bm{h}_{T+1}$ in an MLP with $\ell^2$ normalization after the initial linear layer.}
    \label{fig:mlp_normed_stage_probe}
\end{figure}

\section{Stage-Wise Results for Two-Layer Transformers}
\label{app:2_layer_transformer_stage_probe}

We also examine whether a deeper Transformer exhibits a different stage-wise emergence pattern. \Cref{fig:2_layer_transformer_stage_probe} shows that the main decodability transitions still occur in the first layer. In our setting, increasing depth from one layer to two does not substantially change where the latent-state representation is formed.

\begin{figure}[H]
    \centering
    \includegraphics[width=0.75\linewidth]{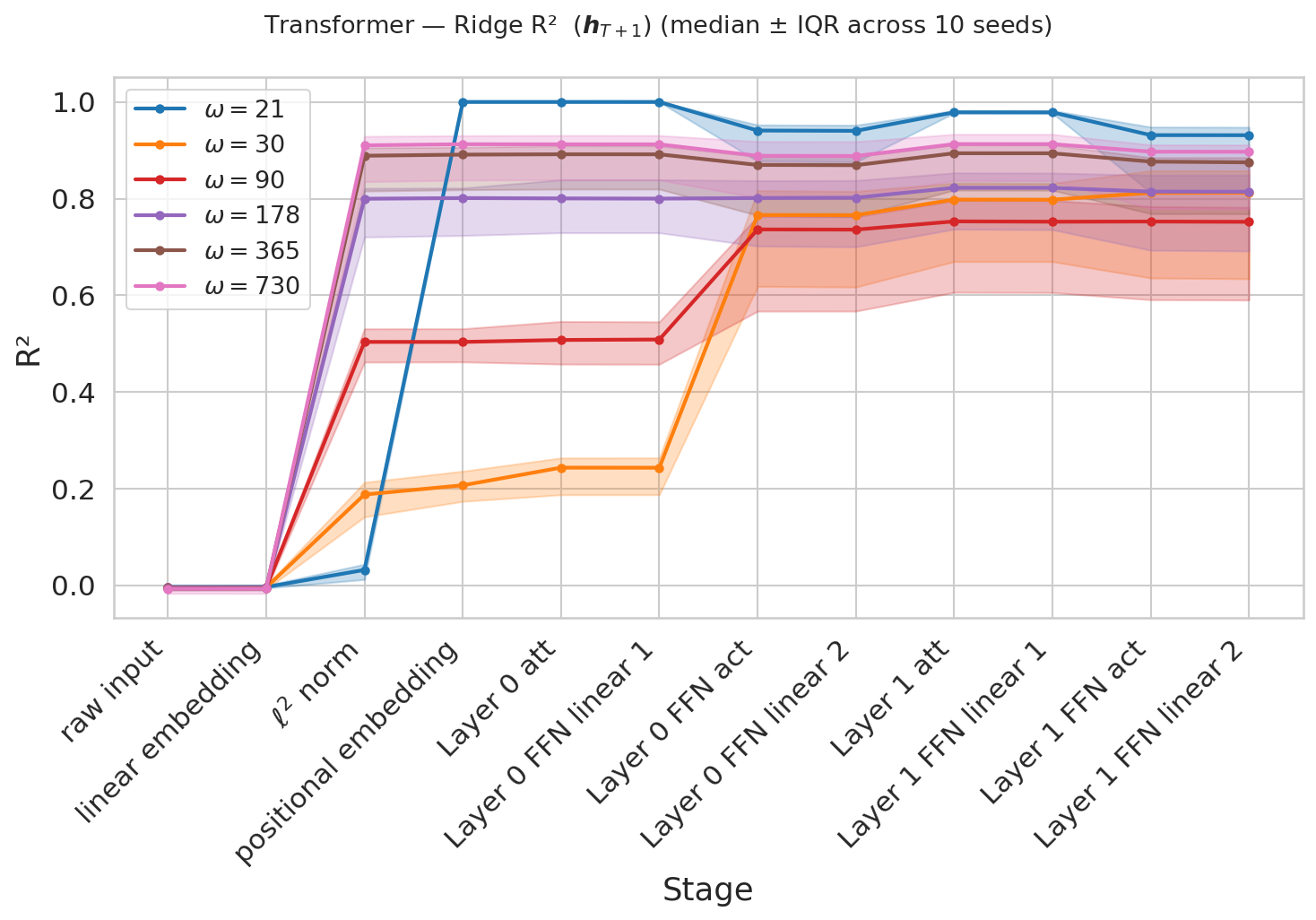}
    \caption{Stage-wise ridge probe $R^2$ for decoding $h_{T+1}$ from intermediate representations in a two-layer Transformer across volatility periods. The main gains in decodability occur before or within the first layer, while the second layer yields only limited additional improvement.}
    \label{fig:2_layer_transformer_stage_probe}
\end{figure}

\section{Impact of Signal-to-Noise Ratio}{\label{app:signal_to_noise}}
The spectral radius $\rho$ of the state-transition matrix $A$ controls the persistence of the latent process $\bm h_t$, and hence its signal-to-noise ratio. Smaller $\rho$ leads to faster decay and weaker recoverable structure, while $\rho$ near $1$ yields more persistent dynamics and higher latent-state decodability. Consistent with this, ridge probe $R^2$ rises sharply as $\rho$ increases and then largely plateaus for the largest values; representative trajectories are shown in \Cref{fig:time_series}, with quantitative results in \Cref{tab:signal_to_noise}.
\begin{table}[H]
\centering
\caption{Mean ridge probe $R^2$ for decoding $\textbf{h}_{T+1}$ from $\bm z$ across $\rho$ for the Transformer and MLP ($n=5$ seeds).}
\label{tab:signal_to_noise}
\begin{tabular}{lcc}
\toprule
$\rho$ & Transformer Mean $R^2$ & MLP Mean $R^2$ \\
\midrule
$0.9999$    & $0.286$ & $0.289$ \\
$0.99999$   & $0.873$ & $0.872$ \\
$0.999999$  & $0.901$ & $0.897$ \\
$0.9999999$ & $0.903$ & $0.899$ \\
\bottomrule
\end{tabular}
\end{table}

\begin{figure}[H]
    \centering
    \includegraphics[width=\linewidth]{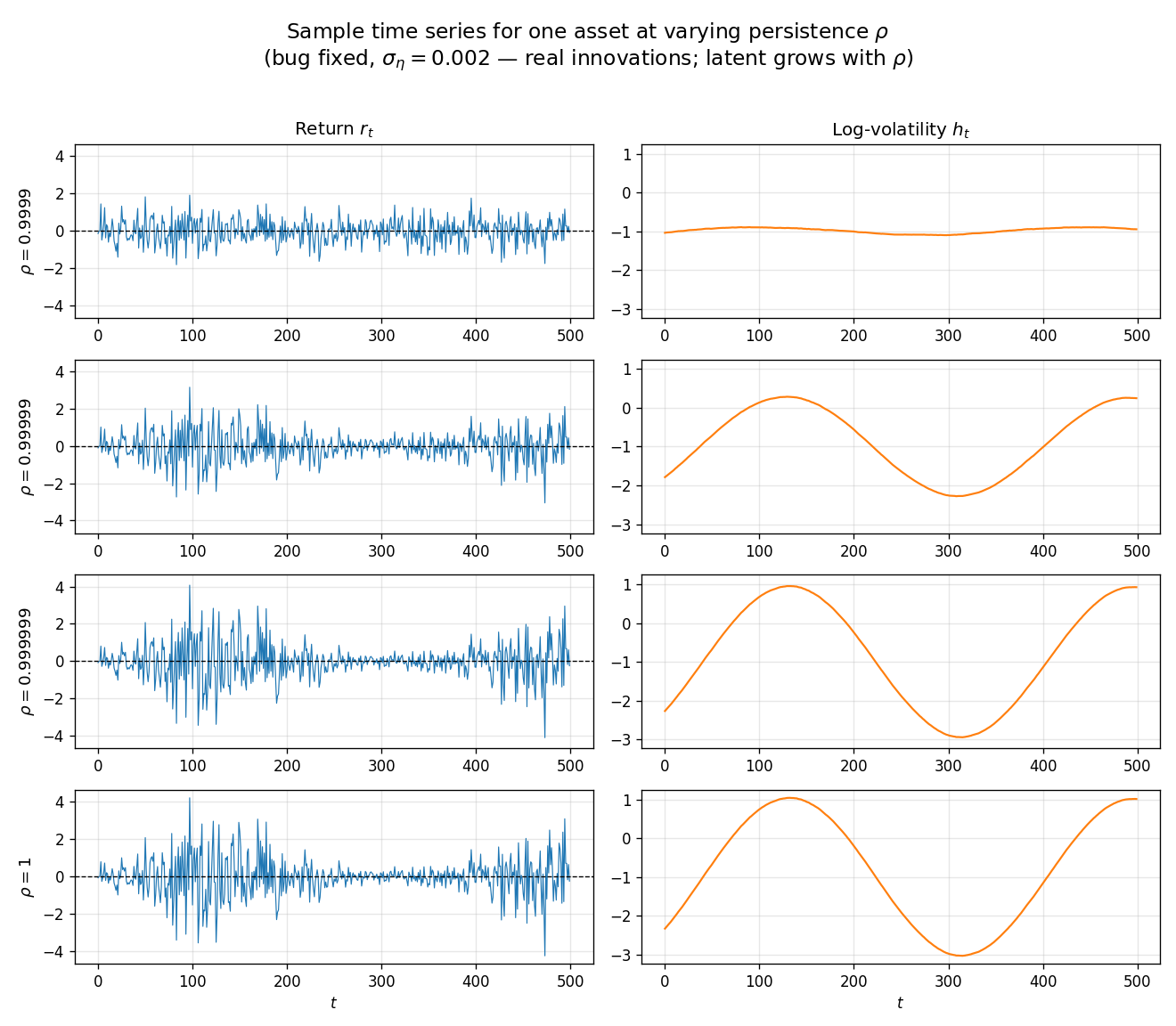}
    \caption{Representative latent trajectories for different spectral radii $\rho$. Smaller $\rho$ leads to faster decay and weaker temporal structure, while $\rho$ close to $1$ produces more persistent latent dynamics.}
    \label{fig:time_series}
\end{figure}

\section{Details on Explicit Latent Filters in Long-Cycle Regimes}
\label{app:cross_stage_ablation}

\paragraph{Both components are essential}
To isolate which components of the input-embedding filter block are load-bearing, we compare three conditions: 
\begin{enumerate}
    \item the full filter, which applies the learned linear projection $W_{\mathrm{in}}$ followed by $\ell^2$ normalization; 
    \item an $\ell^2$-only baseline, which normalizes the raw return window directly and bypasses the learned projection; 
    \item and a linear-only baseline, which uses the projected representation without normalization.
\end{enumerate}
As reported in \Cref{tab:filter_ablation}, only the full composition yields high decodability of $\bm{h}_{t+1}$, while both ablated variants remain near zero. This shows that the learned projection and the normalization step are jointly necessary: neither one alone is sufficient to make the latent state linearly decodable.

\begin{table}[H]
  \centering
  \caption{Ridge $R^2$ for probing $h_{t+1}$ from transformer representations under three ablation conditions (median [IQR], $n=10$ seeds).}
  \label{tab:filter_ablation}
  \small
  \begin{tabular}{lcc}
    \toprule
    Condition & $\omega=365$ & $\omega=730$ \\
    \midrule
    Linear $\to$ $\ell^2$ norm $\to$ Linear
    & $0.890$ \iqr{0.866,\ 0.903}
    & $0.921$ \iqr{0.914,\ 0.933} \\

    $\ell^2$ norm on raw returns $\to$ Linear
    & $-0.004$ \iqr{-0.005,\ -0.002}
    & $-0.003$ \iqr{-0.004,\ -0.002} \\

    Linear embedding only $\to$ Linear
    & $-0.006$ \iqr{-0.009,\ -0.003}
    & $-0.005$ \iqr{-0.015,\ -0.000} \\
    \bottomrule
  \end{tabular}
\end{table}

\paragraph{Causal perturbation of the post-norm representation.}
We next test whether the post-\(\ell^2\)-norm embedding is causally upstream of the forecast. Following causal intervention methods from mechanistic interpretability, in the spirit of activation patching, causal tracing and concept-erasure \citep{meng2022locating,heimersheim2024use,belrose2023leace}, we remove a varying fraction \(\alpha\) of the component aligned with the latent-state direction at a chosen stage and then propagate the perturbed activation through the remainder of the network unchanged. We apply this intervention to the post-input-embedding, post-\(\ell^2\)-norm, post-positional-embedding, post-attention, and post-FFN representations, and measure both the downstream change in latent-state decodability (\Cref{fig:cross_stage_ablation}) and the change in forecast error (\Cref{fig:var365_forecast_delta}). For \(\omega=365\), perturbing the post-\(\ell^2\)-norm representation produces the largest downstream collapse in decodability and the strongest increase in forecast MSE and MAE, whereas perturbations at later stages have substantially weaker effects. This identifies the \(\ell^2\)-normalized embedding as the earliest stage at which the latent-state signal becomes causally relevant for the model's prediction.

\begin{figure}[H]
  \centering
  \includegraphics[width=\linewidth]{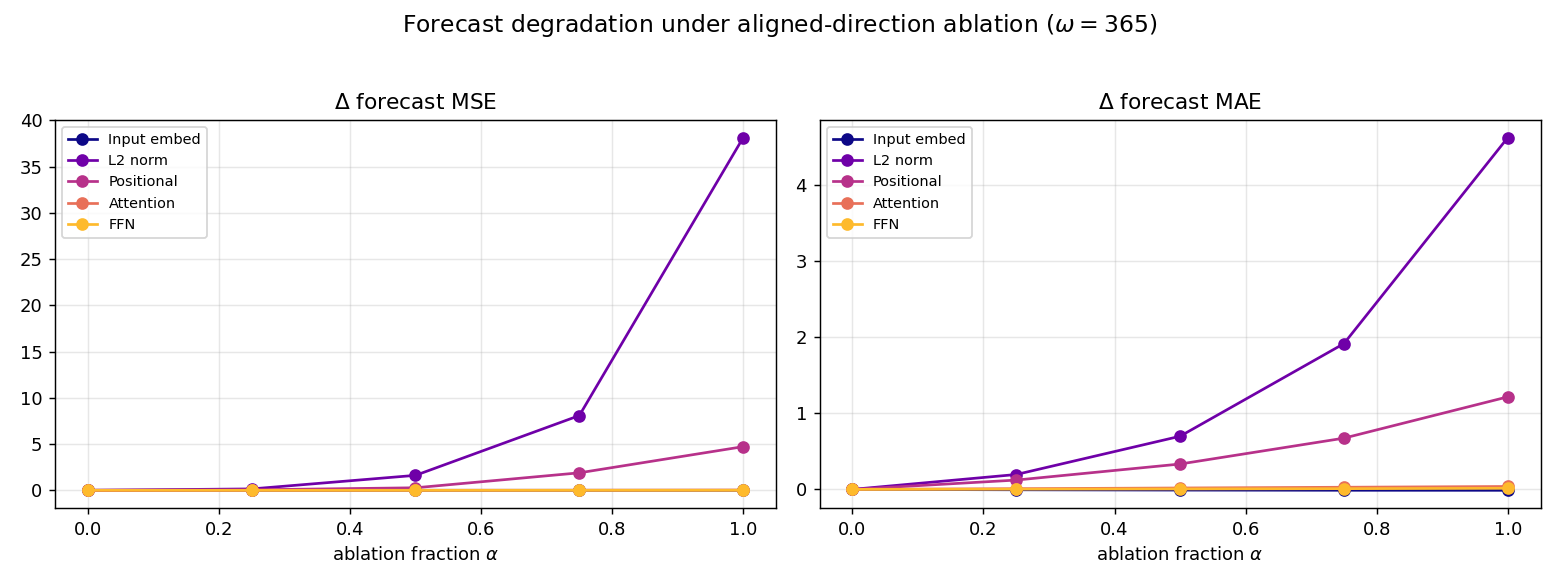}
  \caption{Forecast error change ($\Delta\mathrm{MSE}$ and $\Delta\mathrm{MAE}$,
           ablated minus baseline) as a function of ablation strength $\alpha$
           at $\omega{=}365$, one curve per intervention stage (mean $\pm$ std,
           $n{=}10$ seeds). Intervening at after $\ell^2$-norm causes the 
           largest degradation; later stages other than positional embedding have negligible effect.}
  \label{fig:var365_forecast_delta}
\end{figure}

\section{Related Work}
\label{app:related_work}

% \paragraph{Mechanistic interpretability and transformer circuits.}
% Mechanistic interpretability aims to reverse engineer the internal computations implemented by neural networks, often by identifying circuits, features, or causal pathways inside trained models \citep{olah2020zoom,elhage2021mathematical,olsson2022induction,wang2022interpretability,bricken2023monosemanticity,nanda2023progress,chughtai2023toy}. 
% Much of this work studies language models or controlled deterministic tasks, where the relevant computation is often symbolic or directly observable. 
% Our setting differs by studying noisy, partially observed stochastic dynamics, where the target computational object is a latent volatility state that is hidden from the model but known to the researcher through simulation.

\paragraph{Probing and causal interventions.}
Our stage-wise analysis builds on linear probing methods, which test what information is decodable from intermediate representations \citep{alain2016understanding,hewitt2019designing,belinkov2022probing}. 
Because probe decodability alone does not establish that the model uses the decoded information, we complement probes with activation-level interventions, in the spirit of causal mediation, activation patching, causal scrubbing, and causal-abstraction approaches \citep{vig2020investigating,chan2022causal,geiger2024finding,geiger2025causal,zhang2023activation}. 
In contrast to many concept-level analyses, our probe target is the known latent state of the data-generating process.

\paragraph{Stochastic volatility and latent-state filtering.}
Stochastic-volatility models represent volatility as an unobserved stochastic state that must be inferred from noisy returns \citep{taylor1986modelling,shephard1996statistical,harvey1994multivariate,kim1998stochastic,jacquier2002bayesian}. 
Since returns depend nonlinearly on latent log-volatility, filtering and likelihood evaluation generally require approximation, simulation, or Bayesian methods, rather than closed-form Kalman filtering \citep{kitagawa1996monte,durbin2012time}. 
We use this classical latent-state setting not to introduce a new estimator, but as a controlled benchmark for testing whether neural forecasters internally recover information about the latent volatility state.

\paragraph{Transformers for time series and financial forecasting.}
A growing literature applies Transformer architectures to time-series forecasting, motivated by their ability to capture long-range temporal dependence and interactions across variables through self-attention \cite{wen2022transformers,zhao2026survey,lim2021temporal,zhou2021informer}. 
In finance, Transformers have been used for return prediction, market-risk forecasting, and volatility estimation \cite{yang2020html,zeng2023financial,li2024master}. 
These applications are natural in multivariate financial settings, where asset returns and volatilities interact through common factors, market-wide shocks, and spillover effects \cite{fama1993common,andersen2003modeling,barigozzi2019identification}. 
Most existing work, however, emphasizes predictive performance. 
Our focus is instead mechanistic: we ask whether a Transformer trained for volatility forecasting internally recovers the latent volatility state and where this computation emerges in the architecture.

\paragraph{Domain Knowledge Discovery Via Mechanistic Interpretability} 
A growing line of work studies mechanistic interpretability not only as a tool for explaining model behavior, but also as a route to domain knowledge discovery. In enumerative geometry, \citet{hashemi2025can} train a Transformer to compute $\psi$-class intersection numbers and report interpretability evidence that the model implicitly captures structural objects. In combinatorics, \citet{huang2025black} propose a workflow in which a Transformer trained on paired Dyck paths is interpreted to derive a new algorithmic description of the well known zeta map. Our setting is aligned with this perspective, but focuses on latent-state recovery in stochastic dynamical systems rather than symbolic or combinatorial structure.

\section{Limitations And Broader Implications}

\paragraph{Limitations.}
First, all experiments are conducted in synthetic MSV models with known latent states. This makes controlled mechanistic analysis possible, but leaves open whether the same representational structure and stage-wise emergence persist in real financial time series, which may include heavier tails, regime shifts, and model misspecification. Second, our evidence is still based primarily on linear probes, stage-wise readouts, and targeted aligned-direction interventions. These results establish latent-state recoverability and partial mechanism identification, but not a complete circuit-level account of how the computation is implemented. Third, the architectures studied here are intentionally small. While this supports interpretability, it remains unclear how well the same mechanisms scale to larger forecasting models. Finally, because the MLP baselines also recover substantial latent information in some regimes, our strongest Transformer-specific claim is not latent-state recovery itself, but rather the more localized stage-wise emergence and the explicit long-cycle filter structure.

\paragraph{Broader Implications.}
More broadly, our results suggest that financial time series may provide a useful benchmark for mechanistic interpretability under noisy observations and latent stochastic dynamics. Unlike many synthetic mechanistic tasks, stochastic-volatility models contain irreducible uncertainty and partial observability, forcing models to perform implicit filtering rather than exact algorithmic computation. Understanding how neural sequence models internally represent and manipulate latent-state information in such settings may provide a useful bridge between mechanistic interpretability, time-series modeling, and state-space inference.
%=============================================================================

\end{document}